\documentclass{article}

\PassOptionsToPackage{numbers,compress}{natbib}
 \usepackage[preprint]{neurips_2026}

\usepackage[utf8]{inputenc} 
\usepackage[T1]{fontenc}    
\usepackage[pagebackref,breaklinks,colorlinks]{hyperref}
\usepackage{url}            
\usepackage{booktabs}       
\usepackage{amsfonts}       
\usepackage{nicefrac}       
\usepackage{microtype}      
\usepackage[table]{xcolor}         

\usepackage{mdframed}
\usepackage{lipsum}
\usepackage{amsmath}
\usepackage{geometry}
\usepackage{ulem}
\usepackage{algorithm}
\usepackage{amsthm}
\usepackage{algpseudocode}
\usepackage{xcolor,cancel}
\usepackage{xfrac}
\usepackage{dsfont}
\usepackage{caption}

\usepackage{amssymb}
\usepackage{multirow}
\usepackage{wrapfig}

\definecolor{NavyBlue}{rgb}{0.0, 0.0, 0.5}
\definecolor{richelectricblue}{rgb}{0.03, 0.57, 0.82}
\definecolor{baseline}{gray}{0.90}
\newmdtheoremenv{theo}{Theorem}
\definecolor{ranjay}{RGB}{0, 80, 103}

\definecolor{george}{RGB}{0, 212, 255}

\newcommand{\titlespace}{$\quad$}
\newcommand{\titleheight}{\vspace{3pt}}

\title{Posterior Augmented Flow Matching}

\author{%
  \centerline{George Stoica$^{1,2,\dagger}$\titlespace{}\titlespace{} Sayak Paul$^{3,\diamondsuit}$\titlespace{}\titlespace{} Matthew Wallingford$^{4,\diamondsuit}$} \\
  \centerline{Vivek Ramanujan$^{2}$\titlespace{}\titlespace{} Abhay Nori$^{2}$\titlespace{}\titlespace{} Winson Han$^{3}$} \\
  \centerline{Ali Farhadi$^{2}$\titlespace{}\titlespace{} Ranjay Krishna$^{2}$\titlespace{}\titlespace{} Judy Hoffman$^{1 5}$} \titleheight{} \\
  \centerline{$^{1}$Georgia Tech $^{2}$University of Washington $^{3}$Hugging Face $^{4}$Ai2 $^5$UC Irvine} \titleheight{} \\
  \centerline{$^\dagger$Correspondence to: \texttt{gstoica3@gatech.edu} \titlespace{} $^\diamondsuit$Equal Contribution} \\
}
\begin{document}

\maketitle

\begin{abstract}
    Flow matching (FM) trains a time-dependent vector field that transports samples from a simple prior to a complex data distribution. However, for high-dimensional images, each training sample supervises only a single trajectory and intermediate point, yielding an extremely sparse and high-variance training signal. 
This under-constrained supervision can cause \textit{flow collapse}, where the learned dynamics memorize specific source–target pairings, mapping diverse inputs to overly similar outputs, failing to generalize. 
We introduce \textit{Posterior-Augmented Flow Matching} (PAFM), a theoretically grounded generalization of FM that replaces single-target supervision with an expectation over an approximate posterior of valid target completions for a given intermediate state and condition.
PAFM factorizes this intractable posterior into (i) the likelihood of the intermediate under a hypothesized endpoint and (ii) the prior probability of that endpoint under the condition, and uses an importance sampling scheme to construct a mixture over multiple candidate targets. 
We prove that PAFM yields an unbiased estimator of the original FM objective while substantially reducing gradient variance during training by aggregating information from many plausible continuation trajectories per intermediate. 
Finally, we show that PAFM improves over FM by up to 3.4 FID50K across different model scales (SiT-B/2 and SiT-XL/2), different architectures (SiT and MMDiT), and in both class and text conditioned benchmarks (ImageNet and CC12M), with a negligible increase in the compute overhead.
Code: \url{https://github.com/gstoica27/PAFM.git}.
\end{abstract}
\begin{figure*}[htbp]
  \centering
  \includegraphics[width=\textwidth]{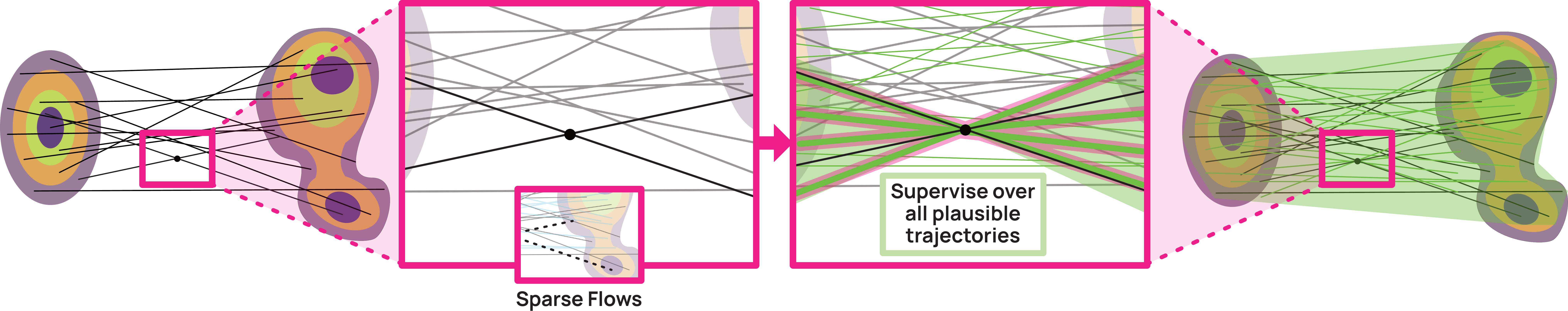}
  \caption{
    \textbf{Left}: Standard FM provides a sparse, one-to-one supervision signal, pairing each intermediate point with a single target and yielding a single plausible flow.
    \textbf{Right}: Our PAFM aggregates supervision over the full posterior of compatible targets, producing a denser, more coherent set of plausible flows from each intermediate.}
  \label{figure}
\end{figure*}
\section{Introduction}

Flow matching (FM)~\citep{lipman2022flow, ma2024sitexploring, stoica2025contrastive, yu2024representation} trains a vector field that transports probability mass from a simple source distribution (e.g., Gaussian noise) to a complex target distribution (e.g., natural images), often conditioned on auxiliary inputs that specify \textit{how} to flow to generate samples matching the given condition (e.g., ``a photo of a dog'')~\cite{esser2024scaling, ma2024sit}.
Concretely, training a FM model proceeds as follows~\cite{lipman2022flow}: sample a data example and its condition from the target distribution, sample a point from the source distribution, and define a continuous interpolation (trajectory) between them.
A random intermediate point along this trajectory is then sampled, and the model is asked to predict the velocity that moves this intermediate toward the target, conditioned on both the intermediate’s location and the condition.
Over many such samples, the model learns to approximate a global flow field that, in expectation, carries any source sample to a valid target consistent with the condition.

Under idealized assumptions—unlimited data, infinite model capacity, and perfect optimization—minimizing the FM loss recovers the true underlying vector field, and thus the data distribution, exactly~\cite{lipman2022flow, fukumizu2024flow}.
In practice, however, models are trained on complex, high-dimensional data for a finite number of steps and with bounded capacity.
They observe only a vanishingly small subset of all possible trajectories, resulting in \textit{sparse supervision} of the flow field~\cite{huang2025improving}.

This sparsity is inherent in the FM loss: each training sample provides feedback on only a \textit{single} trajectory (the one connecting the chosen source and target) at a given intermediate point~\cite{lipman2022flow}.
In high-dimensional spaces, a single intermediate can lie on many plausible trajectories leading to different valid targets under the same condition—for instance, there are countless distinct images consistent with the prompt ``a photo of a dog''~\cite{aggarwal2001surprising}.
Yet the model only ever sees one such outcome.
Because the probability of revisiting the exact same intermediate state is effectively zero, the gradient at that state is estimated from a single noisy supervision signal.
This under-constrained, high-variance training signal can lead to \textit{flow collapse}~\cite{reu2025gradient}, where the learned vector field fails to generalize and instead maps diverse inputs to overly similar or muddled outputs, indicating memorization of specific training pairings rather than learning the true continuous mapping.

We propose \textbf{Posterior-Augmented Flow Matching (PAFM)}, a theoretically grounded generalization of the standard FM objective.
The key insight is that for any intermediate state and condition, there is not a single ``correct'' target data point, but rather an entire \textit{posterior distribution} over valid targets in the data distribution. In other words, there are many plausible ways to complete the flow from the intermediate to a final sample that meets the condition. 
\textit{All such trajectories represent valid training signals for the model}. 
However, these possible flows are not equally likely: some target completions have higher posterior probability than others. 
Therefore, PAFM trains the model to predict the expected velocity over all these possible continuation trajectories, \textit{weighted} by their posterior likelihood. In effect, the model learns to follow the average direction toward the full set of outcomes consistent with the condition, instead of being pushed toward only \textit{one} arbitrarily sampled endpoint. 

While we cannot directly sample uniformly from the intractable true posterior over continuations, we show that it can be factorized into two simpler distributions that we can approximate: (1) the conditional probability of a given intermediate state if a particular target were the endpoint, and (2) the probability of the conditioning variable given that target. 
We show that under certain formulations, a sampling algorithm which draws a set of candidate targets for each intermediate and weighs them according to these probabilities forms an approximate posterior mixture.
Moreover, by aggregating gradients from multiple possible flows (each weighted by its posterior likelihood) for each intermediate point, 
PAFM lower-bounds the variance of the flow matching training gradient

In addition to our theoretical contributions, we demonstrate the applicability of PAFM in real-world settings and model-architectures.
We conduct experiments on the popular class-conditioned ImageNet-1K dataset~\cite{imagenet} across different model scales (SiT-B/2 and SiT-XL/2~\cite{ma2024sit}), and on the large-scale text-to-image CC12M~\cite{sharma2018conceptual} dataset with the MMDiT~\cite{dit} model architecture.
We investigate several practical strategies for constructing the candidate target set, including FAISS~\cite{douze2024faiss} k-nearest-neighbor retrieval in the latent space, augmentation of the source image via random crops at varying scales, and resampling from the VAE~\cite{ldm} moment distribution. 
PAFM improves flow matching across all settings and selection strategies while incurring only marginal computational overhead, highlighting its seamless integration into modern generation workflows. 



\section{Related Work}\label{sec:related_work}

\noindent\textbf{Continuous-time generative modeling and flow matching.}
Continuous-time generative models view sampling as integrating an ODE or SDE between a simple source distribution and a complex target distribution, covering stochastic interpolants, diffusion models, and flow-based methods~\cite{albergo2023stochastic,ho2020ddpm,song2021score}. 
Flow matching (FM) and conditional flow matching (CFM) train a vector field by matching it to the ground-truth probability current along prescribed interpolants between source and data samples~\cite{lipman2022flow,tong2024cfm}. 
Subsequent work has scaled FM and CFM to large image and video models by carefully designing architectures and interpolants, including interpolant transformers such as SiT~\cite{ma2024sit} and large-scale rectified-flow-style models~\cite{liu2022rectified,esser2024rectified}. 
These methods all share the same supervision pattern: each intermediate point along an interpolant is paired with a \textit{single} target sample, yielding a one-to-one training signal.

\noindent\textbf{Rectified flows and trajectory shaping.}
Rectified flows encourage straight trajectories between source and target, enabling near one-step generation and linking flow-based methods to optimal transport formulations~\cite{liu2022rectified,albergo2023stochastic}. At scale, rectified-flow transformers match or surpass diffusion models on high-resolution text-to-image benchmarks via improved time sampling and architecture design~\cite{sd3}, while complementary work studies how controlling trajectory curvature can reduce solver error and accelerate sampling~\cite{lee2023curvature}. Our formulation is compatible with these trajectory-shaping ideas: PAFM leaves the interpolant and geometric regularization unchanged and instead modifies how supervision is aggregated over targets.

\noindent\textbf{Theoretical analyses and improved flow matching objectives.}
The FM objective enjoys strong asymptotic guarantees: under mild assumptions on vector-field parameterization and sample complexity, FM attains near minimax-optimal convergence rates in Wasserstein distance, comparable to diffusion models~\cite{fukumizu2024flow}. Follow-up work tightens the link between learned and ideal probability paths, motivating divergence-matching augmentations to CFM~\cite{huang2025divergence} and relating rectified flows to optimal transport via straightening and gradient constraints~\cite{hertrich2025rectified}. These analyses refine global objectives and path properties but still assume the standard one-target-per-intermediate coupling.

\noindent\textbf{Failure modes and sparse supervision in flow-based models.}
In realistic regimes, models are finite-capacity, trained for finite steps, and only observe a sparse subset of source–target trajectories~\cite{ddim}. Recent empirical and theoretical work has exposed failure modes of flow-based models under these conditions, including degeneracies induced by straight-path objectives and deterministic couplings~\cite{reu2025gradient}. Gradient-variance analyses further show that rectified-flow-style objectives can memorize arbitrary pairings in low-stochasticity regimes, even when interpolant lines intersect, leading to ill-defined vector fields at inference~\cite{reu2025gradient}. Our work is complementary: starting from the same observation that supervision is sparse and under-constrained, we replace one-to-one supervision with a posterior expectation over all compatible targets for each intermediate latent.

\section{Background: Flow Matching}
\label{sec:background}

We provide a formal overview of conditional flow matching and the flow matching (FM) objective as it pertains to our work.

\noindent\textbf{Assumptions.} Let $\mathcal{N}(0, I_d)=p_{source}$ be a source distribution characterized by the standard Gaussian.
Similarly, let $p_{data}$ be the target data distribution with unknown support.
Samples drawn from $p_{source}$ are denoted by $\epsilon^i$, while those from $p_{data}$ are denoted by $(z^i, y^i)$ pairs, where $z^i$ is a data point and $y^i$ is its associated condition. For our purposes, $z^i$ represents an image and $y^i$ a text conditioning.
We may equivalently sample $z^i\sim p_{data}(\cdot|y^i)$ or $y^i\sim p_{data}(\cdot | z^i)$.

Flow matching describes the flow between $p_{source}$ and $p_{data}$ over \textit{normalized and reversed} time.
Let $\text{Unif}[0, 1)$ describe the distribution over this time, and denote its samples by $t$.
Training FM models involves sampling from the source and target distribution (e.g., $\epsilon^i$ and $(z^i, y^i)$), defining a trajectory between their respective samples and then further sampling an intermediate point which lies along their flow.

Let $\alpha(t), \beta(t)$ be two time-dependent monotonic functions that define the path between $\epsilon^i$ and $z^i$, such that $\alpha(0)=\beta(1)=0$ and $\alpha(1)=\beta(0)=1$.
For notational simplicity, we refer to point-evaluations at $t$ by $\alpha_t$ and $\beta_t$ respectively.
Using $\alpha_t$ and $\beta_t$, we sample the intermediate point as $z_t^i=\alpha_t \epsilon^i + \beta_t z^i$.
These samples are assumed to lie on a conditional probability path, given by $z_t^i\sim p_t(\cdot|z^i) = \mathcal{N}(\beta_t z^i, \alpha_t^2 I_d)$.
Observe that $p_0(\cdot|z^i)=p_{source}$ and $p_1(\cdot|z^i)=\delta_{z^i}$ where $\delta_{z^i}$ is the ``Dirac delta'' distribution.
We characterize the flow between $\epsilon^i$ and $z^i$ by the time-derivative of $\alpha_t$ and $\beta_t$ respectively: $v(z_t^i|z^i)=\dot{\alpha}_t \epsilon^i + \dot{\beta}_t z^i$. 
We describe the vector-space over which $\epsilon^i, z^i, z_t^i$ are supported, as the ``latent-space.''

\noindent\textbf{The flow matching objective.}
While $\alpha, \beta$ can take arbitrary forms, $\alpha(t)=t$ and $\beta(t)=1-t$ (with $\dot{\alpha}(t)=1, \dot{\beta}(t)=-1$) is the most popular and widely used formulation~\cite{ma2024sit, yu2024representation}.
Thus, we define $z_t^i=t\epsilon^i + (1-t)z^i$ and $v(z_t^i|z^i)=\epsilon^i - z^i$ respectively.
FM involves training a model to produce flows from arbitrary $z_t^i$ to real targets (e.g., $z^i$) based on a condition (e.g., $y^i$).
Let $f_\theta(z_t^i|t,y^i)$ describe this model and let $\theta$ be its parameterization.
The flow matching objective is finally given by,
\begin{equation}
    \mathcal{L}^{(FM)}(\theta) = \mathbb{E}_{
    t\sim Unif[0,1), \\ (z^i, y^i)\sim p_{data}(\cdot), z_t^i\sim p_t(\cdot|z^i)}|| f_\theta(z_t^i| t, y^i) - v(z_t^i|z^i) ||^2\label{eq:fm_theory}
\end{equation}
Thus, FM models learn the \textit{expected} flows that pass through each intermediate sample towards the entire target distribution, conditioned on each $y^i$. 
In practice, the formal objective in~Eq.~\ref{eq:fm_theory} is transformed to,
\begin{equation}
    \min \overline{\mathcal{L}^{(FM)}}(\theta) = \min \frac{1}{N}\sum_{(z^i,y^i)\sim D} || f_\theta(z_t^i| t, y^i) - v(z_t^i|z^i) ||^2_2\label{eq:fm_training}
\end{equation}
when training flow models, where $D$ is the target dataset. 

\noindent\textbf{Sparse supervision.}
While~Eq.~\ref{eq:fm_training} is an unbiased estimator for~Eq.~\ref{eq:fm_theory} and its global minima is the global optimum under flow matching, we immediately observe its inherent risk of sparse supervision.
For any given intermediate latent point $z_t^i$ and given condition $y^i$, a model is supervised with only a single target trajectory $v(z_t^i|z^i)$.
However, in the high-dimensional and complex settings of contemporary generation tasks, $z_t^i$ could validly lie on a multitude of trajectories flowing to different targets \{$z^j\}$ that all satisfy the same condition $y^i$.
By only providing one such target,~Eq.~\ref{eq:fm_training} gives a high-variance and under-constrained signal over each latent.
As a result, models often rely on sparse coverage of the true conditional vector fields within the latent space, leading to collapsed flows when transporting from source to target data~\cite{agrawalla2025floq}.

\section{Posterior-Augmented Flow Matching (PAFM)}\label{sec:method}
The focus of this section is two-fold.
First, we show how the problem of \textit{sparse supervision} can be addressed by reformulating the flow matching objective to enable simultaneous \textit{multiple} trajectory supervision for training flow models.
We coin this new objective Posterior-Augmented Flow Matching (PAFM), and prove that it improves gradient stability over flow matching under the \textit{same} theoretical assumptions.
Second, we show how PAFM can be seamlessly adapted for training rectified flow models. 
\subsection{Theoretical Results}\label{subsec:pafm_theory}
\noindent\textbf{Reformulating the flow matching objective.}
We begin by observing that it is theoretically feasible for multiple samples drawn from $p_{source}$ and $p_{target}$ to contain paths which flow through the same $z_t^i$.
Moreover, the likelihood of sampling these trajectories is given by a posterior-distribution over the \textit{same} distributions supporting the flow matching objective.
Coupling these observations together, we introduce Posterior-Augmented Flow Matching (PAFM) as,
\begin{equation}
    \mathcal{L}^{(PAFM)}(\theta) = \mathbb{E}_{\substack{
    t\sim Unif[0,1),
    y^i\sim p_{\text{data}}(y), \\
    z_t^i\sim p_t(\cdot),
    z^j\sim p_t(\cdot|z_t^i, y^i)}}
    ||f_\theta(z_t^i| t, y^i) - v(z_t^i|z^j) ||^2\label{eq:pafm_theory}
\end{equation}
where $p_{data}(y)$ defines the probability distribution over the conditions, $p_t(\cdot)$ is a (unknown) probability distribution over the latent-space conditioned on $t$, and $p_t(\cdot |z_t^i,y^i)$ is the \textit{posterior} distribution of the target data given a \textit{fixed} latent-point $z_t^i$ and condition $y^i$.

Importantly,~Equation~\ref{eq:pafm_theory} is mathematically equivalent to the flow matching objective introduced by~Equation~\ref{eq:fm_theory}.
We demonstrate this in Theorem~\ref{theorem:equivalence}.

\begin{theo}\label{theorem:equivalence}
The Posterior-Augmented Flow Matching objective $\mathcal{L}^{(PAFM)}$ is an \textit{unbiased} estimator of the standard Flow Matching objective $\mathcal{L}^{(FM)}$.
In expectation, the two objectives are identical.
\begin{equation}
    \mathcal{L}^{(PAFM)}(\theta) = \mathcal{L}^{(FM)}(\theta)\label{eq:thorem1}
\end{equation}
\end{theo}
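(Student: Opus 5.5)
The plan is to show that both objectives are expectations of the same integrand, $\|f_\theta(z_t|t,y) - v(z_t|z)\|^2$, against the same joint law on $(t, y, z, z_t)$. Once the joint laws are shown to coincide, Eq.~\ref{eq:thorem1} follows immediately.

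I begin by fixing $t$ and writing the joint distribution that underlies Eq.~\ref{eq:fm_theory}:
\begin{equation*}
q_t(z, y, z_t) = p_{data}(z, y)\, p_t(z_t \mid z),
\end{equation*}
with $p_t(z_t\mid z)=\mathcal{N}(\beta_t z, \alpha_t^2 I_d)$. The conditional path depends only on $z$, so $z_t$ is conditionally independent of $y$ given $z$. I then factor this joint in the reverse order as
\begin{equation*}
q_t(z,y,z_t) = p_{data}(y)\, p_t(z_t \mid y)\, p_t(z \mid z_t, y).
\end{equation*}
Here $p_t(z\mid z_t,y) \propto p_t(z_t\mid z)\, p_{data}(z\mid y)$ by Bayes' rule. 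This is exactly the posterior appearing in Eq.~\ref{eq:pafm_theory}, and it is well defined because the Gaussian likelihood is strictly positive for $t>0$. The event $t=0$ has measure zero, so it can be ignored.

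The main obstacle is the middle factor. Eq.~\ref{eq:pafm_theory} writes $z_t^i \sim p_t(\cdot)$ while drawing $y^i$ independently from $p_{data}(y)$, and a literal reading of that would decouple $z_t$ from $y$. For the two laws to match, the marginal must be the conditional marginal $p_t(z_t\mid y)=\int p_t(z_t\mid z)\,p_{data}(z\mid y)\,dz$. I would therefore state explicitly that $p_t(\cdot)$ is read as the probability path marginal induced by the data, conditioned on the sampled $y^i$. With that reading, sampling $y$, then $z_t$, then $z^j$ via the posterior reproduces $q_t$ exactly. Equivalently, if $p_t(\cdot)$ is taken as the unconditional marginal, the pair $(y, z_t)$ must be drawn jointly from $p_t(y,z_t)$ and the same argument goes through; I would note this reading as well.

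With the joint laws identified, I apply Fubini/Tonelli, which is justified because the integrand is nonnegative. This gives
\begin{equation*}
\mathbb{E}_{q_t}\|f_\theta(z_t|t,y)-v(z_t|z)\|^2
\end{equation*}
under both orderings of the iterated expectations. Integrating over $t\sim\mathrm{Unif}[0,1)$ then yields $\mathcal{L}^{(PAFM)}(\theta)=\mathcal{L}^{(FM)}(\theta)$.

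Unbiasedness follows from this equality. A single-sample Monte Carlo draw of $(t,y,z_t,z^j)$ from the PAFM sampling scheme is a draw from the same joint law as in FM. Its expected loss therefore equals $\mathcal{L}^{(FM)}$. The same holds for its gradient, by dominated convergence, under a mild integrability assumption on $f_\theta$ and $\nabla_\theta f_\theta$.
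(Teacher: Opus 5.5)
Your proof is correct and follows the same strategy as the paper: show that both sampling schemes induce the same joint law on $(t,y,z,z_t)$, so the expectations of the common nonnegative integrand coincide. The factor you flagged is exactly where the paper's own derivation slips. Its last step rewrites $p(z_t,y,t)/p(t)$ as $p_t(z_t)\,p(y)$, implicitly assuming $z_t\perp y$, which fails for every $t<1$ whenever $p_{data}(z\mid y)$ genuinely depends on $y$; it also cites $z\perp y\mid z_t$, whereas the independence actually used is your $y\perp z_t\mid z$. So your reading $z_t\sim p_t(\cdot\mid y)$, or equivalently drawing $(y,z_t)$ jointly as Algorithm~\ref{alg:pa_fm_pure} does, is precisely what makes the identity true.
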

\begin{proof}
It suffices to show that the joint probability distributions over which the expectations are taken over are identical. 
FM (~Equation~\ref{eq:fm_theory}) takes its expectation over the joint probability distribution given by $p(t)\cdot p_{data}(z,y)\cdot p_t(z_t|z)$.
The PAFM (~Equation~\ref{eq:pafm_theory}) takes its expectation over the joint distribution given by $p(t)\cdot p_t(z_t)\cdot p_t(z,y|z_t)$.
Note that by the flow matching assumptions introduced in~Section~\ref{sec:background}, $z \perp y | z_t$.
Starting from the distributions for FM,
\begin{align}
    p(z,y)p_t(z_t|z) &= \frac{p(z,y) p(z_t,z, t)}{p(z, t)} = \frac{p(y|z)p(z_t, t|z)p(z)^2}{p(z)p(t)} \\ 
     &= \frac{p(y|z)p(z_t, t|z)p(z)}{p(t)} = \frac{p(y, z_t, t, z)}{p(t)} \\
      &= \frac{p(z|z_t, y, t)p(z_t, y, t)}{p(t)} = p_t(z|z_t, y)p_t(z_t) p(y)
\end{align}
As the two distributions are identical, so is their expectation over the same loss function $||f_\theta(z_t| t, y) - v(z_t|z) ||^2$. $\square$
\end{proof}

\noindent\textbf{PAFM theoretically reduces gradient variance.}
Notably, and in contrast to flow matching,~Equation~\ref{eq:pafm_theory} enables us to optimize $f_\theta$ by sampling multiple target points $\{z^j\}$ for every $z_t^i$, yielding a \textit{denser} supervision signal during training.
We now demonstrate that this fact alone provides gradient estimates for the underlying optimal $f_\theta$ that \textit{lower-bound} the variance of the gradient estimates from the FM objective.

\begin{theo}\label{theorem:variance}
Let $\phi(z^j|z_t^i, y^i)=||f_\theta(z_t^i|y^i, t) - v(z_t^i|z^j)||^2$ be the per-sample flow matching loss.
Define its gradient with respect to $f_\theta$ as $g(z^j|z_t^i, y^i) = \nabla_f\phi(z^j|z_t^i, y^i)=2\left(f_\theta(z_t^i|y^i,t) - v(z_t^i|z^j)\right)$.
For a fixed $z_t^i, y^i$, let $Var_{p_t(\cdot | z_t^i, y^i)}$ $\left[g(z^j|z_t^i, y^i) \right] = \Sigma_g(z_t^i)$.
The PAFM objective, when optimized using $K > 1$ samples per $z_t^i$ via Self-Normalized Importance Sampling (SNIS), yields a gradient estimator with variance $\sfrac{\Sigma_g}{ESS(z_t^i)}$, where $ESS(z_t^i)\geq 1$ is the Kish Effective Sample Size~\cite{wiegand1968kish}.
\end{theo}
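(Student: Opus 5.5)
The plan is to fix $t$, $z_t^i$ and $y^i$ throughout, so that the only randomness is in the candidate targets. Let $q$ be the proposal from which we draw $z^1,\dots,z^K$ i.i.d. (e.g.\ the retrieval, crop or VAE-resampling scheme). Each candidate receives an unnormalized weight $w_k = p_t(z_t^i|z^k)\,p(y^i|z^k)/q(z^k)$. The factorization comes from Bayes' rule together with the conditional independence used in the proof of Theorem~\ref{theorem:equivalence}: $p_t(z|z_t,y)\propto p_t(z_t|z)\,p(y|z)\,p(z)$, and when $q=p_{data}$ the factor $p(z)$ cancels. The normalized weights are $\bar w_k = w_k/\sum_l w_l$. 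The SNIS gradient estimator is then
\begin{equation*}
\hat g_K = \sum_{k=1}^K \bar w_k\, g(z^k|z_t^i,y^i),
\end{equation*}
which is the gradient of the weighted mixture loss $\sum_k \bar w_k\,\phi(z^k|z_t^i,y^i)$. By Theorem~\ref{theorem:equivalence}, the target of this estimator is $\mathbb{E}_{p_t(\cdot|z_t^i,y^i)}[g]$, the gradient of the conditional PAFM loss. The single-sample FM gradient at the same point has covariance $\Sigma_g(z_t^i)$.

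The key step is to compute $\mathrm{Var}[\hat g_K]$. First I would condition on the weights, treating them as fixed. Then $\hat g_K$ is a weighted average of $K$ independent draws of $g$, each with covariance $\Sigma_g$, so
\begin{equation*}
\mathrm{Var}[\hat g_K] = \Sigma_g \sum_k \bar w_k^2 = \Sigma_g/\mathrm{ESS},
\end{equation*}
where $\mathrm{ESS}=(\sum_k w_k)^2/\sum_k w_k^2$. This is the standard Kish heuristic. To justify it, I would use the delta method, i.e.\ a first-order linearization of the ratio estimator. It gives the asymptotic SNIS variance
\begin{equation*}
\tfrac{1}{K}\,\mathbb{E}_q\!\left[\tilde w^2 (g-\mu)(g-\mu)^\top\right]
\end{equation*}
with $\tilde w$ the weight normalized to mean one. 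Under the Kish approximation, which treats $g-\mu$ as roughly uncorrelated with $\tilde w^2$, this factorizes into $\Sigma_g\cdot \mathbb{E}_q[\tilde w^2]/K\approx \Sigma_g/\mathrm{ESS}$.

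The remaining claim, $\mathrm{ESS}\ge 1$, follows from two elementary bounds on nonnegative weights: $\sum_k w_k^2 \le (\sum_k w_k)^2$, and by Cauchy--Schwarz $(\sum_k w_k)^2\le K\sum_k w_k^2$. Together these give $1\le\mathrm{ESS}\le K$. Equality $\mathrm{ESS}=1$ occurs only when a single weight dominates, which recovers standard FM's single-target gradient. Equality $\mathrm{ESS}=K$ occurs for uniform weights, i.e.\ exact posterior sampling. So the variance is never worse than FM's $\Sigma_g$ and improves by up to a factor of $K$.

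The main obstacle is the variance identity itself. SNIS is only asymptotically unbiased, and the normalized weights are correlated with the $g(z^k)$, so $\Sigma_g/\mathrm{ESS}$ is exact only under the conditioning-on-weights argument or the Kish independence approximation. I would state the result in that form: conditional on the realized weights, or to leading order in $1/K$. I would also remark that for $K=1$ the estimator reduces exactly to the FM gradient.
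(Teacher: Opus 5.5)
Your central computation is the paper's own argument. You hold the normalized weights fixed, treat each $g(z^k)$ as an independent draw with covariance $\Sigma_g$, and read off $\Sigma_g\sum_k\bar w_k^2=\Sigma_g/\mathrm{ESS}$, together with $1\le\mathrm{ESS}\le K$ and the $K=1$ reduction. The paper states that step as an exact equality, so your delta-method caveat is already more careful than the original.

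One of your two proposed rescues fails, however: conditioning on the realized weights does not make the identity exact. Write $\pi=p_t(\cdot\mid z_t^i,y^i)$ and $\mu=\mathbb{E}_\pi[g]$.
\begin{itemize}
\item The candidates are drawn from $q$, not $\pi$. So even unconditionally $\mathrm{Cov}[g(z^k)]=\mathrm{Cov}_q[g]$, which is generally not $\Sigma_g$.
\item Each $w_k$ is a deterministic function of $z^k$. Conditioning on $(w_1,\dots,w_K)$ therefore confines $z^k$ to a level set of $w$; for the Gaussian path this is a sphere centred at $z_t^i/(1-t)$.
\item The conditional variance is then $\sum_k\bar w_k^2\,\mathrm{Cov}_q[g\mid w(z)=w_k]$ rather than $\Sigma_g\sum_k\bar w_k^2$. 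The conditional mean $\sum_k\bar w_k\,\mathbb{E}_q[g\mid w(z)=w_k]$ is not $\mu$.
\end{itemize}

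What actually closes the argument is Kish's assumption itself, and you should state it as an explicit hypothesis: $\mathbb{E}_q[g\mid w(z)]$ and $\mathrm{Cov}_q[g\mid w(z)]$ do not depend on the value of $w(z)$. Since $\pi\propto wq$ and $q$ induce the same conditional law on every level set of $w$, these constants must equal $\mu$ and $\Sigma_g$. You then get $\mathbb{E}[\hat g_K\mid w]=\mu$ and $\mathrm{Var}[\hat g_K\mid w]=\Sigma_g/\mathrm{ESS}$ exactly, hence $\Sigma_g\,\mathbb{E}[1/\mathrm{ESS}]$ unconditionally.

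In your delta-method version the decorrelation must likewise be imposed under $\pi$, not under $q$:
\[
\mathbb{E}_q\!\left[\tilde w^2 (g-\mu)(g-\mu)^\top\right]=\mathbb{E}_\pi\!\left[\tilde w\,(g-\mu)(g-\mu)^\top\right]\approx\mathbb{E}_\pi[\tilde w]\,\Sigma_g=\mathbb{E}_q[\tilde w^2]\,\Sigma_g .
\]
Decorrelation under $q$, as your phrasing suggests, leaves $\mathbb{E}_q[(g-\mu)(g-\mu)^\top]$ in place of $\Sigma_g$.

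In the paper's setting this hypothesis is only an approximation. Here $g-\mu=\tfrac{2}{t}(z-\mathbb{E}_\pi z)$ is affine in $z$, while $w$ is a decreasing function of $\|z-z_t^i/(1-t)\|$. So even for $q$ rotationally symmetric about that point, $\mathrm{Cov}_q[g\mid w]$ grows with the squared radius of the level set.

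Two consequences follow. Your conclusion that the variance is never worse than FM's rests on this model rather than being a theorem. Also, $\mathrm{ESS}$ is only defined when some weight is nonzero, which the paper guarantees by always including $z^i$ in the candidate set.
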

\begin{proof}
First, notice that,
\begin{align}
    p_t(z^j|z_t^i, y^i) &= \frac{p_t(z^j, z_t^i, y^i)}{p_t(z_t^i, y^i)} = \frac{p_t(z_t^i, y^i|z^j)p(z^j)}{p_t(z_t^i, y^i)} = \frac{p_t(z_t^i|z^j)p_t(y^i|z^j)p(z^j)}{p_t(z_t^i, y^i)} \\
    &\propto p_t(z_t^i|z^j)p_t(y^i|z^j)p(z^i)\label{eq:indiv_dist_unnorm}
\end{align}
Concretely, the posterior $ p_t(z^j|z_t^i, y^i)$ is directly proportional to the individual probabilities $p_t(z_t^i|z)p_t(y^i|z^j)p(z^j)$ up to a fixed normalizing factor in $z_t^i, y^i$.
Let $\hat{p}_t(z^j|z_t^i, y^i)=p_t(z_t^i|z^j)p_t(y^i|z^j)p(z^j)$ be the un-normalized approximation of $p_t(z^j|z_t^i,y^i)$. Now, define a (un-normalized) distribution $q_t(z^j|z_t^i, y^i)$, and sample $K\geq 1$ elements from the distribution to obtain $\{z^j\}_{j=1}^K\sim q_t(z^j|z_t^i, y^i)$. 
Leveraging self-normalizing importance sampling (SNIS), we define weights $\alpha_j$,
\begin{equation}
    \alpha_j = \frac{\hat{p}_t(z^j|z_t^i, y^i)}{q_t(z^j|z_t^i, y^i)} = \frac{p_t(z_t^i|z^j)p_t(y^i|z^j)p(z^j)}{q_t(z^j|z_t^i,y^i)}
\end{equation}
Defining $q_t(z^j|z_t^i,y^i)=p(z^j)$, $\alpha_j$ reduces to $p_t(z_t^i|z^j)p_t(y^i|z^j)$.
Of these, $p_t(z_t^i|z^j)$ is simply the conditional probability path, and $p_t(y^i|z^j)$ is the posterior-distribution of the condition given the target data point.
We then obtain the SNIS normalized weights by computing $w_j = \sfrac{\alpha_j}{\sum_{k=1}^K \alpha_k}$.
By definition, the SNIS estimator of $\mathbb{E}_{\{z^j\sim p_t(\cdot|z_t^i, y^i)\}}\left[g(z^j|z_t^i, y^i) \right]$ is $\mu_g^{(SNIS)}(z_t^i|K)=\sum_{j=1}^K w_j g(z^j | z_t^i, y^i)$.

We further define the Kish effective sample size (ESS) as, $ESS(z_t^i) = \sfrac{\left[\sum_{j=1}^K w_j \right]^2}{\sum_{j=1}^K w_j^2} = \sfrac{1}{\sum_{j=1}^K w_j^2}$
Thus, $1\leq ESS(z_t^i) \leq K$.
Putting all the pieces together, we obtain, 
\begin{align}
    \text{Var}\left(\mu_g^{(SNIS)}(z_t^i|K)\right) &= \sum_{j=1}^K w_j^2 \text{Var}\left(g(z^j | z^i_t, y^i)\right) = \sum_{j=1}^K w_j^2 \Sigma_g = \frac{\Sigma_g}{\text{ESS}(z_t^i)}
\end{align}
Now, observe that when $K=1$, the SNIS estimator \textit{reduces} to the gradient variance of the flow matching objective from Equation~\ref{eq:fm_theory}:
\begin{align}
    \mu_g^{(SNIS)}(z_t^i|1) &= \sum_{j=1}^1 w_j g(z^j|z_t^i, y^i) = g(z^j|z_t^i, y^i) \\
    &= \nabla_f ||f_\theta(z_t^i|t, y^i) - v(z_t^i|z^j)||^2 \\
    &\rightarrow \text{Var}\left(\mu_g^{(SNIS)}(z_t^i|1)\right) = \Sigma_g
\end{align}
Thus, by choosing $K\geq 1$, PAFM reduces the variance of each gradient estimate over $z_t^i$ by a factor of $\text{ESS}(z_t^i)\geq 1$ compared to flow matching. $\square$
\end{proof}

This further highlights an important result: PAFM is a \textit{generalization} of the flow matching objective, and reduces to it when $K=1$. 
\subsection{Model Training}\label{sec:pafm_training}
We now translate the formal definition of posterior-augmented flow matching (Equation~\ref{eq:pafm_theory}) to a practical training objective for rectified flow models. 
Posterior-augmented flow matching (PAFM) extends the standard flow matching (FM) objective by introducing three components: (i) a target set $\{z^j\}_{j=1}^K$ providing additional supervision trajectories, (ii) the conditional probability path $p_t(z_t^i|z^j)$ capturing how plausible reaching each target is from the current latent and (iii) the condition likelihood $p(y^i|z^j)$ measuring the compatibility of 
targets with the input condition.

\noindent{\textbf{Selecting target points $\{z^j\}_{j=1}^K$.}}
The choice of candidate target points is where PAFM offers its greatest flexibility, and the optimal strategy may vary across generative settings, data modalities and training regimes.
While the theoretical objective (Equation~\ref{eq:pafm_theory}) is defined over the true posterior $p_t(z^j|z_t^i,y^i)$, it is intractable and any practical approximation must be defined over a finite support set. 
PAFM is agnostic to how these candidates are sourced: they may be sampled from the target distribution, augmentations of data from the distribution, perturbations of data in the latent space, etc...
The importance weights $w_j$ then re-weight these candidates according to their posterior likelihoods, shaping the supervision signal toward the posterior distribution.
We investigate three proposal methods in Section~\ref{sec:implementation}, yet emphasize that PAFM is technically compatible with nearly any strategy.
\newline\textit{Note: we always control $\{z^j\}_{j=1}^K$ to include $z^i$ (i.e., the target data point from which $z_t^i$ is sampled under traditional FM).}

\noindent{\textbf{The conditional probability path, $p_t(z_t^i|z^j)$.}}
The conditional probability path describes how plausible each target $z^j$ is for a given latent-point $z_t^i$.
PAFM uses the exact conditional probability path from the FM assumptions inSection~\ref{sec:background}.
Recall that this is a Gaussian distribution parameterized by $\mathcal{N}(\beta_t z^j, \alpha_t^2 I_d)$ where $\alpha_t = t$ and $\beta_t = 1 - t$.
The log-likelhood is thus given by,
\begin{equation}
\log{p_t(z_t^i|z^j)} = -\frac{||z_t^i - (1-t)z^j||^2}{2t^2} + C\label{eq:cond_path}
\end{equation}
where $C$ is the normalizing constant.
In practice, we compute the un-normalized likelihood $\exp\left(-\sfrac{||z_t^i - (1-t)z^j||^2}{2t^2}\right)$.
\newline\textit{Note: Substituting $z_t^i=t\epsilon^i + (1-t)z^i$ into Equation~\ref{eq:cond_path} and simplifying yields the equivalent formulation: $\exp\left(-\sfrac{||t\epsilon^i + (1-t)(z^i - z^j)||^2}{2t^2}\right)$. 
Because $\{t, \epsilon^i, z^i\}$ are constant for all $z^j\in\{z^j\}_{j=1}^K$, the sharpness of $p_t(z_t^i|z^j)$ largely depends on the distance between each $z^j$ and $z^i$.}
\begin{figure}[!t] 
    \centering
    \includegraphics[width=.95\textwidth]{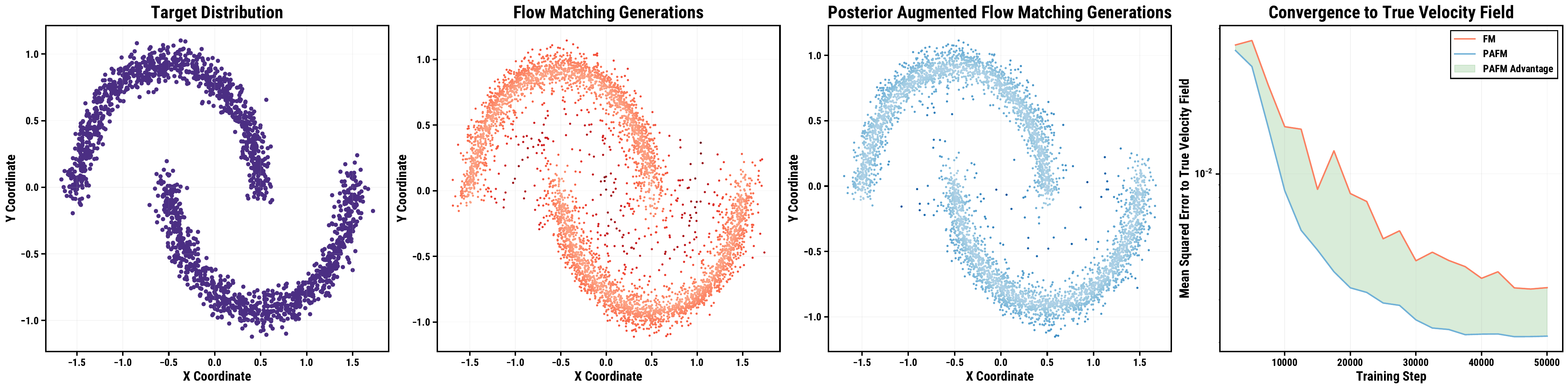}
    \caption{\textbf{Posterior augmented flow matching is more robust than flow matching.} We train two rectified flow models to generate two crescent moon distributions (left). Second-left: a model trained with FM generates many points in between the two moons. Second-right: the same model trained with PAFM has far less of an issue. Right: PAFM estimates the true velocity field across $t$ significantly better.}
    \label{fig:crescent_vf_error}
\end{figure}

\noindent{\textbf{The condition likelihood, $p_t(y^i|z^j)$.}}
The condition likelihood depicts how suitable each target $z^j$ is for the condition $y^i$.
In the case of class-conditioned flow matching, this is a deterministic function $\forall t$: $p_t(y^i|z^j) = \begin{cases}
1 \text{ if } y^i = y^j \\
0 \text{ otherwise }
\end{cases}$.
In the case of continuous-conditioned generation (e.g., text-to-image), this distribution is unknown apriori and must be approximated---forming the second design choice in PAFM. 
While many viable choices exist, one intuitive strategy for modeling the distribution is employing a vision-language model such as CLIP~\cite{clip} to compute the alignment of $y^i$ and $z^j$.

\noindent{\textit{Note: We use the naive class-conditioned variant for our text-to-image experiments owing to its simplicity, and leave more refined condition likelihood approximations to future work.}}

\noindent{\textbf{The PAFM training objective.}}
With a defined candidate selection strategy for $\{z^j\}_{j=1}^K$ and chosen $p_t(y^i|z^j)$ distribution, we can train with PAFM using the following general objective,
\begin{align}
    \min \overline{\mathcal{L}^{(PAFM)}}(\theta) &= \min \frac{1}{N}\sum_{i=1}^N \sum_{j=1}^K w_j || f_\theta(z_t^i| t, y^i) - v(z_t^i|z^j) ||^2_2 \\
    &\stackrel{\triangle}{=} \min\frac{1}{N}\sum_{i=1}^N || f_\theta(z_t^i| t, y^i) - \sum_{j=1}^K w_j v(z_t^i|z^j) ||^2_2\label{eq:pafm_training}
\end{align}
where $w_j = p_t(z^i_t|z^j)p_t(y^i|z^j), \text{ and } \sum_{j=1}^K w_j = 1$.

\noindent{\textbf{The PAFM batch step.}}
Algorithm~\ref{alg:pa_fm_pure} illustrates a batch step with PAFM, where \textcolor{NavyBlue}{navy text} denotes additions to the standard FM objective.

\begin{algorithm}
\caption{Posterior Augmented Flow Matching Batch Step}
\label{alg:pa_fm_pure}
\begin{algorithmic}[1]
\State \textbf{Input:} Model $f_\theta$, batch $B=\{(z^{i},y^{i},\epsilon^{i})\}_{i=1}^{n}$ where $(z^{i}, y^{i})\sim p_{\text{data}}(\cdot)$, $\epsilon^{i}\sim \mathcal{N}(0,\mathrm{I}_d)$, learning rate $\gamma$. 
\textcolor{NavyBlue}{Candidate pool $\{(z^{j}, y^{j})\}_{j=1}^{K}$ for each $i$ s.t.\ $(z^i, y^i) \in \{(z^j, y^j)\}_{j=1}^K$.
}
\State \textbf{Output:} Updated model parameters $\theta$
\State $\mathcal{L}(\theta) = 0$
\For{$i$ in range($n$)}
    \State $t\sim U[0,1),\quad z_t^{i} = t\epsilon^{i} + (1-t) z^{i}$
    \For{\textcolor{NavyBlue}{$j$ in range($K$)}}
        \State \textcolor{NavyBlue}{$v(z_t^{i}|z^{j}) = \sfrac{(z_t^{i} - z^{j})}{t},\quad \hat{w}_j = p_t(z^i_t|z^j)\, p(y^i|z^j)$}
    \EndFor
    \State $\mathcal{L}(\theta) \mathrel{+}= \| f_\theta(z_t^{i} | t, y^{i}) - \textcolor{NavyBlue}{\sum_{j=1}^{K} \left(\sfrac{\hat{w}_j}{\sum_{k=1}^K \hat{w}_k}\right) v(z_t^{i}|z^{j})}\|^2$
\EndFor
\State $\theta \leftarrow \theta - \frac{\gamma}{N}\nabla_\theta \mathcal{L}(\theta)$
\end{algorithmic}
\end{algorithm}

\noindent{\textbf{Understanding PAFM with an example.}}
To build intuition for PAFM's effect on training rectified flow models, we train two four-layer MLP models to generate a simple two-dimensional crescent moon distribution inFigure~\ref{fig:crescent_vf_error} (left). 
Under standard FM, the model generates a substantial number of spurious points in the region between the two crescents (second-left), despite training for an extensive period of time (50K steps).
This is a consequence of sparse supervision: the model observes few trajectories passing through these intermediate regions, thereby limiting its capability at estimating the true velocity field.
However, training the same model with PAFM largely resolves this artifact (second-right), as each intermediate point receives supervision from multiple weighted targets rather than a single trajectory---enabling the model better estimate the true velocity field.
The rightmost panel quantifies this gap directly: we plot the mean squared error between the learned velocity and the the analytically computed true velocity field across all denoising steps ($t$) over the course of training.
Notably, PAFM steadily lowers its velocity field error throughout training, whereas FM exhibits substantial variance in its field estimates across training steps. 
Moreover, PAFM ultimately converges to a substantially more accurate velocity field than FM. 
We provide additional implementation details in Appendix A. 
\section{Experiments}\label{sec:implementation}
Posterior-augmented flow matching (PAFM) is general and can be applied anywhere flow matching (FM) is used. 
As described in~Section~\ref{sec:pafm_training}, a primary design choice when training with PAFM is selecting the candidate target set $\{z^j\}_{j=1}^K$ for each $z^i$. 
While many proposal mechanisms are viable, we find that a simple nearest-neighbor retrieval in latent-space consistently improves over FM across class-conditioned (ImageNet-1K~\cite{imagenet}) and text-to-image (CC12M~\cite{sharma2018conceptual}) generation, across architectures (SiT~\cite{ma2024sit} and MMDiT~\cite{dit}), and across model scales (SiT-B/2 and SiT-XL/2).
We further show that PAFM can empirically reduce the gradient variance during training compared with FM, analyze the computational overhead of training with PAFM and ablate alternative target selection strategies---demonstrating the inherent tailorability of PAFM. 


\noindent{\textbf{Setup.}} 
We train all our models for 400K iterations with a batch size of 256 on images with 256x256 resolution, and with the REPA loss~\cite{yu2024representation}. 
All experiments are conducted on single nodes with 8 H100 GPUs.
We evaluate all our models without classifier-free guidance (CFG)~\cite{ho2022classifier} using the ODE sampler with 50 denoising steps, and report Inception Score (IS)~\cite{salimans2016improved}, Fréchet Inception Distance (FID)~\cite{heusel2017gans}, sFID~\cite{nash2021generating}, precision and recall over $50$K samples, as is standard.

\subsection{Nearest Neighbor PAFM Results}\label{sec:pafm_knn}
Our primary implementation of PAFM forms the target set $\{z^j\}_{j=1}^K$ by retrieving the $K$ nearest-neighbors of each training example $z^i$ in latent-space.
Neighbors are extracted \textit{once} for each $z^i$ using FAISS indices~\cite{douze2024faiss} computed over the train dataset during the data-preprocessing, and training simply loads the precomputed neighbors alongside each $z^i$---introducing no additional online compute.
Nearest neighbors allow a healthy effective sample size throughout training, as candidates are geometrically close to $z^i$.

\begin{wrapfigure}{r}{0.48\textwidth} 
  \centering
  \small
  \vspace{-14pt} 
  \captionof{table}{\textbf{PAFM improves FM on ImageNet-1K.} 
  Results on FID50K without CFG and NFE=50 after 400K training iterations.
  Models are trained with REPA.
  }
  \resizebox{\linewidth}{!}{%
  \begin{tabular}{lllccccc}
  \toprule
  \textbf{Model} & & \textbf{Obj.} & $\bf{K}$ & \textbf{FID}$\downarrow$ & \textbf{sFID}$\downarrow$ & \textbf{Prec.}$\uparrow$ & \textbf{Rec.}$\uparrow$ \\
  \midrule
  \multirow{4}{*}{SiT-B/2} & & \cellcolor{baseline}FM & \cellcolor{baseline}-- & \cellcolor{baseline}27.57 & \cellcolor{baseline}11.44 & \cellcolor{baseline}\underline{0.57} & \cellcolor{baseline}0.63 \\
          & & PAFM & 64  & 25.45 & 6.91 & \textbf{0.58} & \underline{0.64} \\
          & & PAFM & 16  & \textbf{24.88} & \textbf{6.81} & \textbf{0.58} & \underline{0.64} \\
          & & PAFM & 8   & \underline{25.25} & \underline{6.90} & \textbf{0.58} & \textbf{0.65} \\
          & & PAFM & 4   & 25.47 & 6.97 & \textbf{0.58} & \underline{0.64} \\
  \midrule
  \multirow{2}{*}{SiT-XL/2} & & \cellcolor{baseline}FM & \cellcolor{baseline}-- & \cellcolor{baseline}\underline{11.14} & \cellcolor{baseline}\underline{8.25} & \cellcolor{baseline}\underline{0.67} & \cellcolor{baseline}\textbf{0.66} \\
           & & PAFM & 16  & \textbf{9.85} & \textbf{5.24} & \textbf{0.68} & \underline{0.65} \\
  \bottomrule
  \end{tabular}
  }
  \label{tab:imagenet_results}
  \vspace{-10pt}
\end{wrapfigure}
\noindent{\textbf{Class-conditioned generation on ImageNet-1K.}} 
We restrict the neighbor search to images sharing the same class label, computing an independent FAISS index per class over the image latent encodings.
Since all candidates share the same class-label, the condition likelihood $p_t(y^i|z^j)=1, \forall z^j\in\{z^j\}_{j=1}^K$.
Table~\ref{tab:imagenet_results} shows the results when training SiT-B/2 and SiT-XL/2 models with PAFM compared to FM.
Notably, PAFM improves over FM-trained models in \textit{nearly all} metrics across \textit{both} model scales and \textit{all} $K$, highlighting its efficacy. 
Interestingly, PAFM peaks at $K=16$ for SiT-B/2 reflecting a bias-variance tradeoff: too few neighbors limit variance reduction while too many include distance candidates that add noise rather than signal.

\noindent{\textbf{Text-to-image generation on CC12M.}}
Without discrete class labels, we approximate class-conditioning in two stages.
First, we encode all images and captions using the SigLIP2~\cite{tschannen2025siglip} So400M/14 encoder\footnote{\url{https://huggingface.co/google/siglip2-so400m-patch14-384}} and retrieve the $M$ images whose encodings have highest cosine similarity to $y^i$.
Second, from this shortlist we select the $K$ nearest neighbors to $z^i$ in latent space, as in the class-conditioned setting.
This ensures candidates are both semantically compatible with the condition and geometrically close in latent space.
We use $M=128$ and $K=32$ in our experiments.
Since the candidate set has already been filtered for semantic compatibility with $y^i$, we treat the condition likelihood $p_t(y^i|z^j)$ as approximately uniform across the set and omit it from the weight computation.
Thus, only the conditional path likelihood $p_t(z_t^i|z^j)$ determines the relative importance weights---$w_j$.
We train MMDiT models on CC12M with PAFM and FM, and evaluate the resultant checkpoints after 400K training iterations on a held-out validation set of 50K examples.
Overall, we observe that FM achieves an FID50K of 10.37, compared to \textbf{9.45} when using PAFM, illustrating how PAFM can be utilized to improve flow matching even in dramatically more challenging text-to-image settings.

\subsection{Alternative Target Selection Strategies}\label{sec:alternative_pafm}
While selecting $\{z^j\}_{j=1}^K$ via nearest neighbor search 
is sufficient to consistently improve over flow matching, we emphasize that there may be many viable alternative strategies, depending on the desired application setting. 
In this section, we ablate two additional selection strategies, finding promising results with each. 
All models trained in this section are REPA-SiT-B/2 with 256 batch size on ImageNet-1K 256$\times$256 resolution for 400K iterations.

\noindent
\begin{wrapfigure}{r}{0.48\textwidth} 
\centering
\small
\vspace{-12pt}
\captionof{table}{
\textbf{PAFM with random augmentations improves FM.}
Results on FID50K without CFG and NFE=50 after 400K training iterations. 
The target set for PAFM is obtained by center cropping each training image to an ``augmented resolution'' (''Aug. Res.'') and then randomly resize cropping down to 256x256 resolution $K=5$ times.
}
\label{tab:source_transform}
\resizebox{\linewidth}{!}{%
\begin{tabular}{lccccc}
\toprule
\textbf{Obj.} & \textbf{Aug. Res.} & \textbf{FID}$\downarrow$ & \textbf{sFID}$\downarrow$ & \textbf{Prec.}$\uparrow$ & \textbf{Rec.}$\uparrow$ \\
\midrule
\cellcolor{baseline}FM & \cellcolor{baseline}$256\times256$ & \cellcolor{baseline}27.57 & \cellcolor{baseline}11.44 & \cellcolor{baseline}0.57 & \cellcolor{baseline}0.63 \\
\midrule
\multirow{4}{*}{PAFM} 
        & $281\times281$ & \underline{25.03} & \phantom{0}6.84 & \underline{0.58} & \textbf{0.65} \\
        & $320\times320$ & \textbf{24.15} & \textbf{\phantom{0}6.70} & \textbf{0.59} & \textbf{0.65} \\
        & $384\times384$ & 25.43 & \phantom{0}7.13 & \underline{0.58} & \textbf{0.65} \\
        & $512\times512$ & 25.44 & \phantom{0}\underline{6.72} & \underline{0.58} & \underline{0.64} \\
\bottomrule
\end{tabular}
}
\end{wrapfigure}%
\noindent{\textbf{Augmenting the source image.}}
A natural alternative to creating the target set for each training image $z^i$ by retrieving its nearest neighbors, is by augmenting the training image itself.
Depending on the augmentation, resultant images may provide different views (e.g., from random crops), styles (e.g., jitter or applying style transfer methods), or compositions (e.g., horizontal flips, rotations) of the original image---increasing the diversity of the underlying supervision signal. 
In this first foray, we focus on the effect of sampling different spatial views of $z^i$.
Specifically, we first center-crop each image to $256R\times 256R$ resolution, where $R\geq 1$ is referred to as the ``augmentation scale'' (aug. scale).
We then take $K-1$ random resize crops down to $256\times 256$ resolution, resulting in a target set of $K-1$ different ``views'', and finish by adding $z^i$ to the set. 
Notably, the augmentation scale controls the spatial diversity of the resulting targets: small factors yield images that are nearly identical to $z^i$, while larger factors produce more varied views.
While this strategy rests on the assumption that random crops are plausible samples from the target image distribution, we posit that it is reasonable given its widespread use for training foundation vision models~\cite{clip, oquab2024dinov, tschannen2025siglip}.
Table~\ref{tab:source_transform} shows results on SiT-B/2 with $K=5$.
PAFM improves over FM at every augmentation scale, with 1.25$\times$ substantially lowering FID by 3.42.
We compute these augmentations online in our experiments; however they could equally be precomputed and instead directly loaded during training to eliminate this overhead.

\begin{wrapfigure}{r}{0.48\textwidth}
\centering
\small
\vspace{-12pt}
\captionof{table}{
\textbf{Sampling K times from the VAE improves FM.}
Results on FID50K without CFG and NFE=50 after 400K training iterations. 
The target set for PAFM is obtained by randomly sampling from the training image's VAE moment $K$ times (FM always samples once).
}
\resizebox{\linewidth}{!}{%
\label{tab:vae_sampling}
\definecolor{baseline}{gray}{0.90}
\begin{tabular}{clccccc}
\toprule
\textbf{Model} & \textbf{Obj.} & \textbf{K} & \textbf{FID}$\downarrow$ & \textbf{sFID}$\downarrow$ & \textbf{Prec.}$\uparrow$ & \textbf{Rec.}$\uparrow$ \\
\midrule
\multirow{2}{*}{SiT-B/2} & \cellcolor{baseline}FM & \cellcolor{baseline}-\phantom{0} & \cellcolor{baseline}27.57 & \cellcolor{baseline}11.44 & \cellcolor{baseline}0.57 & \cellcolor{baseline}0.63 \\
\cmidrule{2-7}
& PAFM & 10 & \textbf{25.15} & \textbf{6.95} & \textbf{0.58} & \textbf{0.65} \\
\bottomrule
\end{tabular}
}
\vspace{-10pt}
\end{wrapfigure}
\noindent{\textbf{Sampling from the VAE moment distribution.}}
All our rectified flow matching models are trained in the latent space of a VAE~\cite{ldm}, where targets are created by first encoding each training image into a multivariate Gaussian distribution (termed ``moment'') and then sampling from it---amounting to small random perturbations around the mean encoding.
Flow matching (and all our previously described PAFM approaches) sample once from the moment at each step.
Thus, a very simple target selection strategy for PAFM is instead sampling from this moment $K$ times.
While this is the cheapest selection strategy proposed (sampling from the moment requires minimal overhead), the VAE posterior is tightly concentrated. 
This yields candidates that explore only a small neighborhood in latent space, and can limit their diversity.
Table~\ref{tab:vae_sampling} shows that this simple strategy can improve over flow matching when $K=10$. 
This shows how PAFM can extract gains from even very modest transformations of the underlying target point $z^i$, underscoring its flexibility as a training framework.

\subsection{Analysis}\label{sec:analysis}
\noindent
\noindent{\textbf{PAFM induces negligible computational overhead.}}
We benchmark the computational overhead of training with the nearest neighbor version of PAFM compared to FM using the MMDiT architecture on CC12M over 500 consecutive training iterations. 
\begin{wrapfigure}{r}{0.48\textwidth}
\centering
\small

\captionof{table}{
\textbf{PAFM marginally increases overhead.}
Computational overhead comparison between FM and PAFM with $K{=}32$ neighbors on CC12M 256$\times$256 with MMDiT. Benchmarked on 8$\times$H100 GPUs with total batch size of 256.}
\label{tab:computational_overhead}

\setlength{\tabcolsep}{4pt}
\resizebox{\linewidth}{!}{%
\begin{tabular}{@{}lcc@{}}
\toprule
\textbf{Metric} & \textbf{FM} & \textbf{PAFM} \\
\midrule  
Throughput (samples/sec) & 1151 & 1074 {\footnotesize\color{NavyBlue!60}(--6.6\%)} \\
Peak Memory (GB) & 19.11 & 19.18 {\footnotesize\color{NavyBlue!60}(+0.4\%)} \\
FLOPs/step (GFLOPs) & 10773 & 10773 {\footnotesize\color{NavyBlue!60}(+0.0\%)} \\
\bottomrule
\end{tabular}
}

\vspace{12pt}

\includegraphics[width=\linewidth]{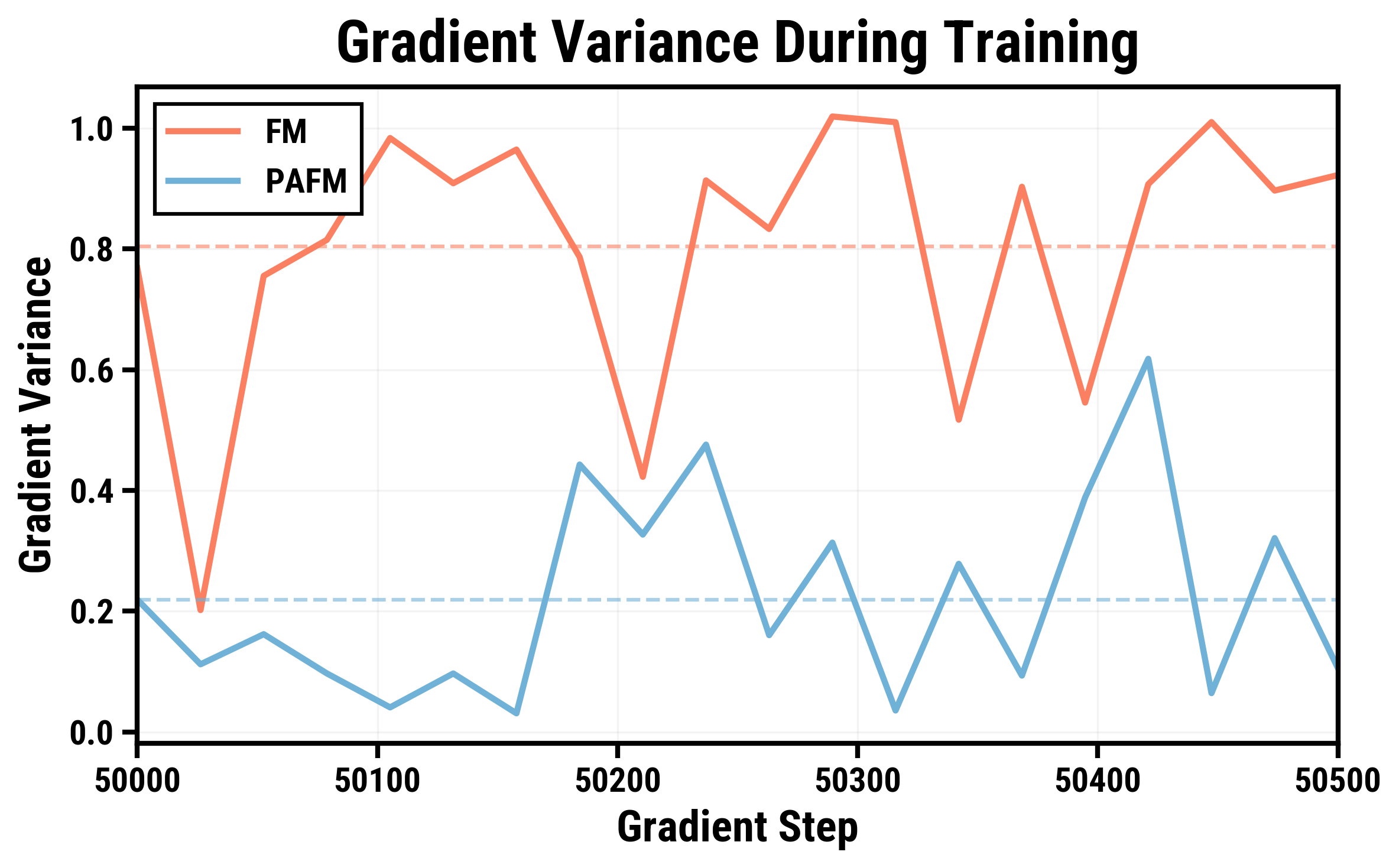}

\captionof{figure}{
\textbf{PAFM reduces mini-batch gradient variance over FM.}
Optimization steps over the same 500 iterations for REPA-SiT-B/2~\cite{ma2024sit} models trained with nearest neighbor PAFM with $K{=}16$ and FM on ImageNet-1K~\cite{imagenet}.
Full lines show gradient variance at each iteration, while the correspondingly colored dashed lines indicate the mean variance across all iterations.
PAFM reduces gradient variance by $\sim4\times$.
}
\label{fig:gradient_variance}

\vspace{-20pt}
\end{wrapfigure}
PAFM marginally decreases throughput by 6.6\%, only increases peak memory consumption by 0.4\% and has no perceived change in GFLOPs.
We attribute this to the fact that PAFM nearest neighbors can be computed \textit{once} during data-preprocessing and simply loaded alongside each training example during training. 
Thus, PAFM can be nearly as efficient as its flow matching counterpart.

\noindent{\textbf{PAFM reduces gradient variance in practice.}} 
Theorem~\ref{theorem:variance} establishes that under---certain sampling conditions---posterior-augmented flow matching (PAFM) lowers the gradient variance at each intermediate point in the latent space, by a factor dependent on the number of additional samples used for supervision compared to flow matching (FM). 
By lower bounding the FM gradient variance at arbitrary fixed intermediate points, PAFM should further lower bound the variance of the gradients across batches during training, leading to more stable gradient updates. 
We test this empirically by training two REPA-SiT-B/2 models on ImageNet-1K 256x256 resolution with the FM and nearest neighbor PAFM (K=16) objectives respectively.
We measure gradient variance as $\mathrm{Tr}(\Sigma_g) = \mathbb{E}\!\left[\|g - \hat g\|^2\right]$, where $\Sigma_g$ is the covariance of the stochastic gradient defined in Theorem~\ref{theorem:variance} and $\hat g$ is the full-batch gradient.
At each training iteration, we randomly sample 500 batches (each of size 256), and compute the variance across these mini-batch gradients.
Figure~\ref{fig:gradient_variance} compares the gradient variance midway through training at 50,000 steps over the same 500 iterations. 
PAFM achieves lower batch-gradient variance compared to FM \textit{at every iteration}, and its mean variance is nearly $4\times$ less than that of FM: $0.22\text{, versus }0.80$.
This illustrates how PAFM can reduce gradient variance beyond theoretical settings; it is observed empirically in real-world settings, yielding more stable gradient steps.

\section{Conclusion}\label{sec:conclusion}

We introduce Posterior-Augmented Flow Matching (PAFM), a principled extension of flow matching (FM) that replaces the sparse one-to-one supervision of FM with a posterior-weighted expectation over all plausible trajectories. 
We show that PAFM is an unbiased estimator of the FM objective that provably lower bounds its gradient variance under certain sampling conditions. 
PAFM can be flexibly adapted for efficiently training rectified flow models in real-world settings, improving FM performance by up to 3.4 FID50K across class-conditioned and text-to-image benchmarks (ImageNet-1K and CC12M respectively), popular model architectures (SiT and MMDiT), and model scales (SiT-B/2 and SiT-XL/2).---whilst increasing computational overhead by just 6.6\%. 
\section{Acknowledgments}\label{sec:acknowledgments}
This work is supported in part by an NSF Graduate Research Fellowship, a Google PhD Fellowship, and NSF awards \#2622839 and \#2403297.

\bibliographystyle{plainnat}
\bibliography{main}

@String(ICCV= {Int. Conf. Comput. Vis.})

@String(ECCV= {Eur. Conf. Comput. Vis.})

@String(ICLR = {Int. Conf. Learn. Represent.})

@String(ICCV  = {ICCV})

@String(ECCV  = {ECCV})

@String(ICLR  = {ICLR})

@misc{yu2024representation,
      title={Representation Alignment for Generation: Training Diffusion Transformers Is Easier Than You Think}, 
      author={Sihyun Yu and Sangkyung Kwak and Huiwon Jang and Jongheon Jeong and Jonathan Huang and Jinwoo Shin and Saining Xie},
      year={2024},
      eprint={2410.06940},
      archivePrefix={arXiv},
      primaryClass={cs.CV},
      url={https://arxiv.org/abs/2410.06940}, 
}

@inproceedings{ma2024sit,
    author = {Ma, Nanye and Goldstein, Mark and Albergo, Michael S. and Boffi, Nicholas M. and Vanden-Eijnden, Eric and Xie, Saining},
    title = {SiT: Exploring Flow and Diffusion-Based Generative Models with Scalable Interpolant Transformers},
    year = {2024},
    url = {https://doi.org/10.1007/978-3-031-72980-5_2},
    booktitle = {Computer Vision – ECCV 2024: 18th European Conference, Milan, Italy, September 29–October 4, 2024, Proceedings, Part LXXVII},
}

@misc{ma2024sitexploring,
      title={SiT: Exploring Flow and Diffusion-based Generative Models with Scalable Interpolant Transformers}, 
      author={Nanye Ma and Mark Goldstein and Michael S. Albergo and Nicholas M. Boffi and Eric Vanden-Eijnden and Saining Xie},
      year={2024},
      eprint={2401.08740},
      archivePrefix={arXiv},
      primaryClass={cs.CV},
      url={https://arxiv.org/abs/2401.08740}, 
}

@misc{ho2022classifier,
      title={Classifier-Free Diffusion Guidance}, 
      author={Jonathan Ho and Tim Salimans},
      year={2022},
      eprint={2207.12598},
      archivePrefix={arXiv},
      primaryClass={cs.LG},
      url={https://arxiv.org/abs/2207.12598}, 
}

@inproceedings{
lipman2022flow,
title={Flow Matching for Generative Modeling},
author={Yaron Lipman and Ricky T. Q. Chen and Heli Ben-Hamu and Maximilian Nickel and Matthew Le},
booktitle={ICLR},
year={2023},
url={https://openreview.net/forum?id=PqvMRDCJT9t}
}

@article{albergo2023stochastic,
  title={Stochastic interpolants: A unifying framework for flows and diffusions},
  author={Albergo, Michael S and Boffi, Nicholas M and Vanden-Eijnden, Eric},
  journal={arXiv},
  year={2023}
}

@article{fid,
  title={Gans trained by a two time-scale update rule converge to a local nash equilibrium},
  author={Heusel, Martin and Ramsauer, Hubert and Unterthiner, Thomas and Nessler, Bernhard and Hochreiter, Sepp},
  journal={Advances in neural information processing systems},
  volume={30},
  year={2017}
}

@article{ddim,
  title={Denoising diffusion implicit models},
  author={Song, Jiaming and Meng, Chenlin and Ermon, Stefano},
  journal={arXiv},
  year={2020}
}

@article{sfid,
  title={Generating images with sparse representations},
  author={Nash, Charlie and Menick, Jacob and Dieleman, Sander and Battaglia, Peter W},
  journal={arXiv},
  year={2021}
}

@inproceedings{ldm,
  title={High-resolution image synthesis with latent diffusion models},
  author={Rombach, Robin and Blattmann, Andreas and Lorenz, Dominik and Esser, Patrick and Ommer, Bj{\"o}rn},
  booktitle={Proceedings of the IEEE/CVF conference on computer vision and pattern recognition},
  pages={10684--10695},
  year={2022}
}

@inproceedings{dit,
  title={Scalable diffusion models with transformers},
  author={Peebles, William and Xie, Saining},
  booktitle={Proceedings of the IEEE/CVF International Conference on Computer Vision},
  pages={4195--4205},
  year={2023}
}

@misc{sd3,
  title={Scaling Rectified Flow Transformers for High-Resolution Image Synthesis}, 
  author={Patrick Esser and Sumith Kulal and Andreas Blattmann and Rahim Entezari and Jonas Müller and Harry Saini and Yam Levi and Dominik Lorenz and Axel Sauer and Frederic Boesel and Dustin Podell and Tim Dockhorn and Zion English and Kyle Lacey and Alex Goodwin and Yannik Marek and Robin Rombach},
  year={2024},
  eprint={2403.03206},
  archivePrefix={arXiv},
  primaryClass={cs.CV}
}

@article{var,
  title={Visual Autoregressive Modeling: Scalable Image Generation via Next-Scale Prediction},
  author={Tian, Keyu and Jiang, Yi and Yuan, Zehuan and Peng, Bingyue and Wang, Liwei},
  journal={arXiv},
  year={2024}
}

@inproceedings{imagenet,
  title={Imagenet: A large-scale hierarchical image database},
  author={Deng, Jia and Dong, Wei and Socher, Richard and Li, Li-Jia and Li, Kai and Fei-Fei, Li},
  booktitle={2009 IEEE conference on computer vision and pattern recognition},
  pages={248--255},
  year={2009},
  organization={Ieee}
}

@inproceedings{clip,
  title={Learning transferable visual models from natural language supervision},
  author={Radford, Alec and Kim, Jong Wook and Hallacy, Chris and Ramesh, Aditya and Goh, Gabriel and Agarwal, Sandhini and Sastry, Girish and Askell, Amanda and Mishkin, Pamela and Clark, Jack and others},
  booktitle={International conference on machine learning},
  pages={8748--8763},
  year={2021},
  organization={PMLR}
}

@article{
oquab2024dinov,
title={{DINO}v2: Learning Robust Visual Features without Supervision},
author={Maxime Oquab and Timoth{\'e}e Darcet and Th{\'e}o Moutakanni and Huy V. Vo and Marc Szafraniec and Vasil Khalidov and Pierre Fernandez and Daniel HAZIZA and Francisco Massa and Alaaeldin El-Nouby and Mido Assran and Nicolas Ballas and Wojciech Galuba and Russell Howes and Po-Yao Huang and Shang-Wen Li and Ishan Misra and Michael Rabbat and Vasu Sharma and Gabriel Synnaeve and Hu Xu and Herve Jegou and Julien Mairal and Patrick Labatut and Armand Joulin and Piotr Bojanowski},
journal={TMLR},
year={2024},
}

@article{esser2024scaling,
author = {Esser, Patrick and Kulal, Sumith and Blattmann, Andreas and Entezari, Rahim and M\"{u}ller, Jonas and Saini, Harry and Levi, Yam and Lorenz, Dominik and Sauer, Axel and Boesel, Frederic and Podell, Dustin and Dockhorn, Tim and English, Zion and Rombach, Robin},
title = {Scaling rectified flow transformers for high-resolution image synthesis},
year = {2024},
journal = {ICML},
}

@inproceedings{sharma2018conceptual,
  title = {Conceptual Captions: A Cleaned, Hypernymed, Image Alt-text Dataset For Automatic Image Captioning},
  author = {Sharma, Piyush and Ding, Nan and Goodman, Sebastian and Soricut, Radu},
  booktitle = {Proceedings of ACL},
  year = {2018},
}

@article{stoica2025contrastive,
  title={Contrastive Flow Matching},
  author={Stoica, George and Ramanujan, Vivek and Fan, Xiang and Farhadi, Ali and Krishna, Ranjay and Hoffman, Judy},
  journal={ICCV},
  year={2025}
}

@article{fukumizu2024flow,
  title={Flow matching achieves almost minimax optimal convergence},
  author={Fukumizu, Kenji and Suzuki, Taiji and Isobe, Noboru and Oko, Kazusato and Koyama, Masanori},
  journal={arXiv},
  year={2024}
}

@inproceedings{
huang2025improving,
title={Improving Flow Matching by Aligning Flow Divergence},
author={Yuhao Huang and Taos Transue and Shih-Hsin Wang and William M Feldman and Hong Zhang and Bao Wang},
booktitle={Forty-second International Conference on Machine Learning},
year={2025},
url={https://openreview.net/forum?id=FeZimuj6SG}
}

@inproceedings{aggarwal2001surprising,
  title={On the surprising behavior of distance metrics in high dimensional space},
  author={Aggarwal, Charu C and Hinneburg, Alexander and Keim, Daniel A},
  booktitle={International conference on database theory},
  year={2001},
  organization={Springer}
}

@article{reu2025gradient,
  title={Gradient Variance Reveals Failure Modes in Flow-Based Generative Models},
  author={Reu, Teodora and Dromigny, Sixtine and Bronstein, Michael and Vargas, Francisco},
  journal={arXiv},
  year={2025}
}

@article{agrawalla2025floq,
  title={floq: Training critics via flow-matching for scaling compute in value-based rl},
  author={Agrawalla, Bhavya and Nauman, Michal and Agrawal, Khush and Kumar, Aviral},
  journal={arXiv},
  year={2025}
}

@article{wiegand1968kish,
  title={Kish, L.: Survey Sampling. John Wiley \& Sons, Inc., New York, London 1965, IX + 643 S., 31 Abb., 56 Tab., Preis 83 s.},
  author={H. Wiegand},
  journal={Biometrische Zeitschrift},
  year={1968}
}

@article{heusel2017gans,
  title={Gans trained by a two time-scale update rule converge to a local nash equilibrium},
  author={Heusel, Martin and Ramsauer, Hubert and Unterthiner, Thomas and Nessler, Bernhard and Hochreiter, Sepp},
  journal={Advances in neural information processing systems},
  volume={30},
  year={2017}
}

@article{tong2024cfm,
  title   = {Improving and Generalizing Flow-Based Generative Models with Minibatch Optimal Transport},
  author  = {Tong, Alexander and Fatras, Kilian and Malkin, Nikolay and Huguet, Guillaume and Zhang, Yanlei and others},
  journal = {Transactions on Machine Learning Research},
  year    = {2024}
}

@inproceedings{liu2022rectified,
  title     = {Flow Straight and Fast: Learning to Generate and Transfer Data with Rectified Flow},
  author    = {Liu, Xingchao and Gong, Chengyue and Liu, Qiang},
  booktitle = {Advances in Neural Information Processing Systems},
  year      = {2022}
}

@inproceedings{lee2023curvature,
  title     = {Minimizing Trajectory Curvature of {ODE}-based Generative Models},
  author    = {Lee, Sangyun and Kim, Beomsu and Ye, Jong Chul},
  booktitle = {International Conference on Machine Learning},
  year      = {2023}
}

@inproceedings{esser2024rectified,
  title     = {Scaling Rectified Flow Transformers for High-Resolution Image Synthesis},
  author    = {Esser, Patrick and Kulal, Sumith and Blattmann, Andreas and Entezari, Rahim and M{\"u}ller, Jonas and others},
  booktitle = {International Conference on Machine Learning},
  year      = {2024}
}

@inproceedings{huang2025divergence,
  title     = {Improving Flow Matching by Aligning Flow Divergence},
  author    = {Huang, Yuhao and Transue, Taos and Wang, Shih-Hsin and Feldman, William M. and Zhang, Hong and Wang, Bao},
  booktitle = {International Conference on Machine Learning},
  year      = {2025}
}

@article{hertrich2025rectified,
  title   = {On the Relation between Rectified Flows and Optimal Transport},
  author  = {Hertrich, Johannes and Chambolle, Antonin and Delon, Julie},
  journal = {arXiv},
  year    = {2025}
}

@inproceedings{ho2020ddpm,
  title     = {Denoising Diffusion Probabilistic Models},
  author    = {Ho, Jonathan and Jain, Ajay and Abbeel, Pieter},
  booktitle = {Advances in Neural Information Processing Systems},
  year      = {2020}
}

@inproceedings{song2021score,
  title     = {Score-Based Generative Modeling through Stochastic Differential Equations},
  author    = {Song, Yang and Sohl-Dickstein, Jascha and Kingma, Diederik P. and Kumar, Abhishek and Ermon, Stefano and Poole, Ben},
  booktitle = {International Conference on Learning Representations},
  year      = {2021}
}

@article{salimans2016improved,
  title={Improved techniques for training gans},
  author={Salimans, Tim and Goodfellow, Ian and Zaremba, Wojciech and Cheung, Vicki and Radford, Alec and Chen, Xi},
  journal={Advances in neural information processing systems},
  year={2016}
}

@article{nash2021generating,
  title={Generating images with sparse representations},
  author={Nash, Charlie and Menick, Jacob and Dieleman, Sander and Battaglia, Peter W},
  journal={arXiv},
  year={2021}
}

@article{douze2024faiss,
  title={The Faiss library},
  author={Matthijs Douze and Alexandr Guzhva and Chengqi Deng and Jeff Johnson and Gergely Szilvasy and Pierre-Emmanuel Mazaré and Maria Lomeli and Lucas Hosseini and Hervé Jégou},
  year={2024},
  journal={arXiv},
}

@article{tschannen2025siglip,
  title={SigLIP 2: Multilingual Vision-Language Encoders with Improved Semantic Understanding, Localization, and Dense Features},
  author={Tschannen, Michael and Gritsenko, Alexey and Wang, Xiao and Naeem, Muhammad Ferjad and Alabdulmohsin, Ibrahim and Parthasarathy, Nikhil and Evans, Talfan and Beyer, Lucas and Xia, Ye and Mustafa, Basil and H\'enaff, Olivier and Harmsen, Jeremiah and Steiner, Andreas and Zhai, Xiaohua},
  year={2025},
  journal={arXiv preprint arXiv:2502.14786}
}
\clearpage
\section*{Appendix}
\addcontentsline{toc}{section}{Appendix} 
\setcounter{page}{1}
\renewcommand\thesection{\Alph{section}}  
\renewcommand\thesubsection{\thesection.\arabic{subsection}}
\renewcommand\thefigure{\thesection.\arabic{figure}}  
\renewcommand\thetable{\thesection.\arabic{table}}  
\setcounter{section}{0}
\setcounter{figure}{0}
\setcounter{table}{0}
\setcounter{algorithm}{0}
\makeatletter
\makeatother


\section{Crescent Example Implementation Details}\label{app:crescent}
\noindent\textbf{Setting.} We construct a two-dimensional crescent moon target distribution, each comprising of 1,000 points.
The source distribution is an isotropic Gaussian with a standard deviation of 0.1, intentionally shifted away from the target to make the transport problem non-trivial.

\noindent\textbf{Model Architecture.} We train two models, each with the flow matching (FM) and posterior augmented flow matching (PAFM) objectives respectively.
Each model is a 4-layer MLP, with the SiLU activation function
after each intermediate layer, and a hidden dimension of 128. 
We use 32-dimensional sinusoidal time-step embeddings, to illustrate that PAFM extends beyond linear time-steps.
The time embedding and data coordinates are concatenated to form the input to the first layer of the network.

\noindent\textbf{Training.}
Both models are trained for 50,000 steps with the Adam optimizer
at an initial learning rate of 5e-4, with cosine learning rate decay. 
For PAFM, we compute the posterior weights $w_j$ over all $N=$2,000 target data points at each training step.
Both models are initialized with the same random seed.

\noindent\textbf{Sampling.}
We generate samples using Euler integrations for 300 steps, and generate 5,000 samples for each model.

\noindent\textbf{Evaluation.}
We quantify convergence to the true velocity field by analytically computing the marginal velocity field and measuring the mean squared error against each model's predictions, averaged across denoising time steps $t$.

\section{PAFM improves FM without REPA}\label{app:vanilla}
\begin{wrapfigure}{r}{0.48\textwidth} 
  \centering
  \small
  \vspace{-15pt} 
  \captionof{table}{\textbf{PAFM improves FM on ImageNet-1K without REPA.} 
  Results on FID50K without CFG and NFE=50 after 400K training iterations.
  }
  \resizebox{\linewidth}{!}{%
  \begin{tabular}{lllccccc}
  \toprule
  \textbf{Model} & & \textbf{Obj.} & $\bf{K}$ & \textbf{FID}$\downarrow$ & \textbf{sFID}$\downarrow$ & \textbf{Prec.}$\uparrow$ & \textbf{Rec.}$\uparrow$ \\
  \midrule
  \multirow{4}{*}{SiT-B/2} & & \cellcolor{baseline}FM & \cellcolor{baseline}-- & \cellcolor{baseline}39.98 & \cellcolor{baseline}7.07 & \cellcolor{baseline}\underline{0.51} & \cellcolor{baseline}\textbf{0.64} \\
          & & PAFM & 64  & \textbf{38.19} & \textbf{6.65} & \textbf{0.52} & \underline{0.63} \\
          & & PAFM & 16  & 39.03 & 6.97 & \underline{0.51} & \underline{0.63} \\
          & & PAFM & 4   & 38.80 & 6.91 & \textbf{0.52} & \underline{0.63} \\
  \bottomrule
  \end{tabular}
  }
  \label{tab:imagenet_vanilla}
\end{wrapfigure}
We train SiT-B/2 models without REPA on ImageNet-1K with 256 batch size for 400K iterations. 
We evaluate all models without classifier-free guidance (CFG), using the ODE sampler for 50 denoising steps, and report FID, sFID, precision and recall, over 50,000 generated samples.
Table~\ref{tab:imagenet_vanilla} shows the results.
Overall, we observe that PAFM improves over FM, despite not utilizing REPA. 

\section{PAFM May be More Robust to Data Sparsity}\label{app:sparsity}

We train two rectified flow models for 15,000 steps, using the same regime described inAppendix~\ref{app:crescent}, across different data-sparsity regimes.
In each regime, we vary the number of target distribution samples available during training according to $N=\{100, 200, 500, 1000\}$ total samples.
Figure~\ref{fig:sparsity_crescents} illustrates the results, where each plot displays the kernel density estimate (KDE) over generations in each sparsity-setting.
Overall, we find that models trained with PAFM better recover the true target distribution under data sparsity.
We attribute this to the dense supervision inherent in PAFM: at each latent point, the objective aggregates supervision from many target samples, rather than from a single instance as in FM.
This provides substantially richer gradient signal, even with $N$ is small.

\begin{figure}[!t] 
    \centering
    \includegraphics[width=.95\textwidth]{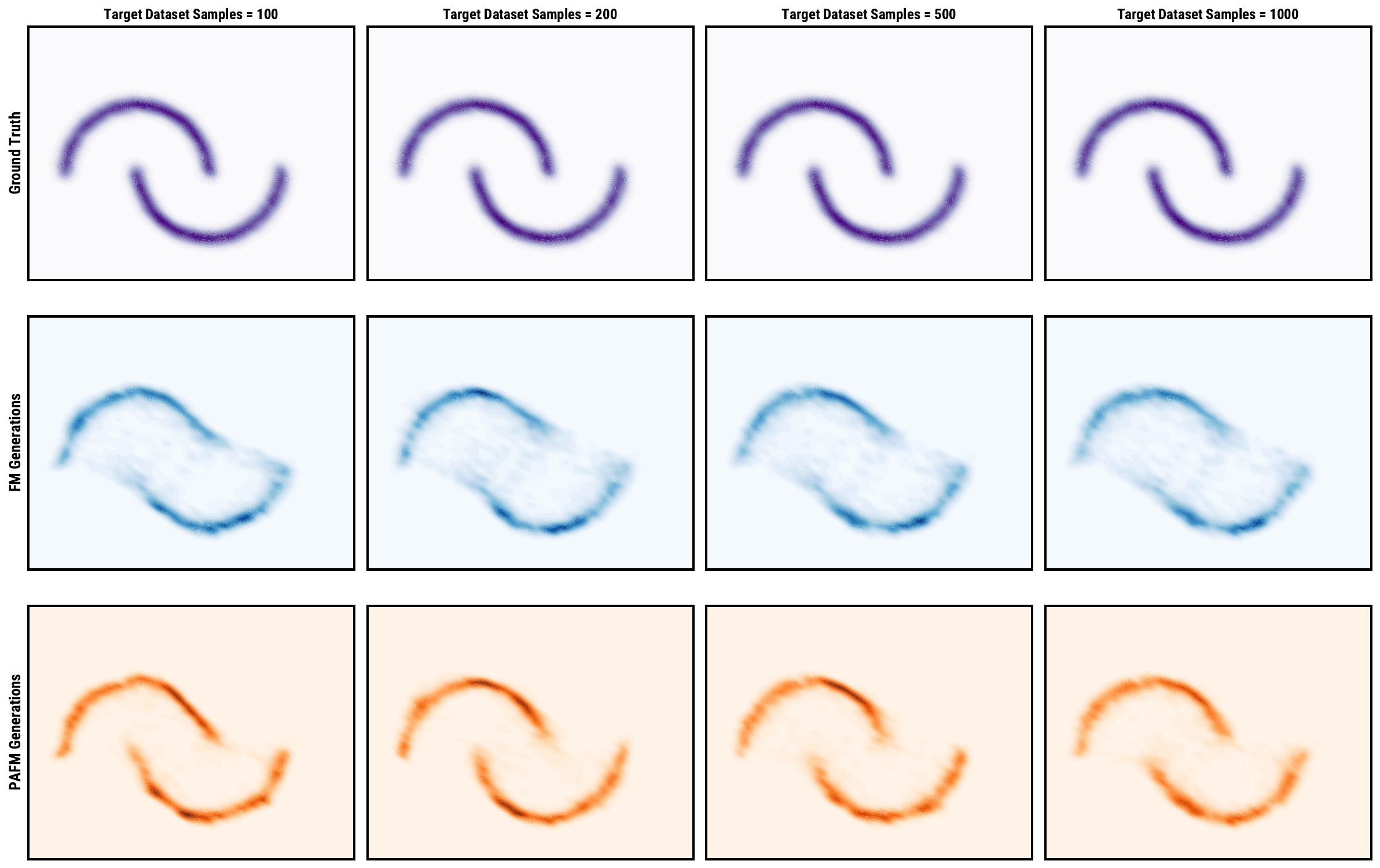}
    \caption{\textbf{Posterior augmented flow matching learns the target distribution better under data sparsity.} We train two rectified flow models to generate two crescent moon distributions, under different data sparsity regimes. Column titles indicate the total number of training data points given to each model. Rows indicate the source used for generating targets: the ``Ground Truth'' distribution (top), the trained ``FM'' model (middle), and the trained ``PAFM'' model (bottom). Each plot shows kernel density estimates (KDE) for the generated target distribution.}
    \label{fig:sparsity_crescents}
\end{figure}

\end{document}